\title{Learning to Control in Metric Space with Optimal Regret}
\author{Lin F. Yang\thanks{Princeton University, \texttt{lin.yang@princeton.edu}}, Chengzhuo Ni\thanks{Peking University, \texttt{hzxsncz@pku.edu.cn}}, Mengdi Wang\thanks{Princeton University, \texttt{mengdiw@princeton.edu }}} 
\newcommand{\cM}{\mathcal{M}}
\newcommand{\cS}{\mathcal{S}}
\newcommand{\cA}{\mathcal{A}}
\newcommand{\RR}{\mathbb{R}}
\newcommand{\EE}{\mathbb{E}}
\newcommand{\wh}{\widehat}
\newcommand{\wt}{\widetilde}
\newcommand{\cK}{\mathcal{K}}
\newcommand{\reg}{\mathrm{Regret}}
\newcommand{\supp}{\mathrm{supp}}
\def\cX{\mathcal{X}}
\newcommand{\cC}{\mathcal{C}}
\newcommand{\nn}{\texttt{NN}}
\newcommand{\dist}{\texttt{dist}}
\newcommand{\funcapprox}{\texttt{FuncApprox}}
\newcommand{\cN}{\mathcal{N}}
\newtheorem{theorem}{Theorem}
\newtheorem{lemma}[theorem]{Lemma}
\newtheorem{assump}{Assumption}
\begin{document}

\maketitle

\begin{abstract}
We study online reinforcement learning for finite-horizon deterministic control systems with {\it arbitrary} state and action spaces. Suppose that the transition dynamics and reward function is unknown, but the state and action space is endowed with a metric that characterizes the proximity between different states and actions. We provide a surprisingly simple upper-confidence reinforcement learning algorithm that uses a function approximation oracle to estimate optimistic Q functions from experiences. We show that the regret of the algorithm after $K$ episodes is 
$O(HL(KH)^{\frac{d-1}{d}}) $ where 
$L$ is a smoothness parameter, and $d$ is the doubling dimension of the state-action space with respect to the given metric. We also establish a near-matching regret lower bound. The proposed method can be adapted to work for more structured transition systems, including the finite-state case and the case where value functions are linear combinations of features, where the method also achieve the optimal regret.
\end{abstract}

\section{Introduction}

Reinforcement learning has proved to be a powerful approach for online control of complicated systems \cite{bertsekas1995dynamic,sutton2018reinforcement}.
Given an unknown transition system with unknown rewards, we aim to learn to control the system on-the-fly by exploring available actions and receiving real-time feedback. Learning to control efficiently requires the algorithm to actively explore the problem space and dynamically update the control policy.

A major challenge with effective exploration is how to generalize past experiences to unseen states. Extensive research has focused on reinforcement learning with parametric models, for examples linear quadratic control \cite{dean2018regret}, linear model for value function approximation \cite{parr2008analysis}, and state aggregation model \cite{singh1995reinforcement}. While these parametric models would substantially reduce the complexity or regret of reinforcement learning, their practical performances are at risk of model misspecification.

In this paper, we focus on finite-horizon deterministic control systems {\it without} any parametric model. We suppose that the control system is endowed with a metric $\dist$ that characterizes the proximity between state and action pairs. We assume that the transition and reward functions are continuous with respect to $\dist$, i.e., states that are close to each other have similar values.
Proximity measures of the state-actions  have been extensively studied in the literature (see e.g. \cite{ferns2004metrics,  ortner2007pseudometrics,castro2010using, ferns2012methods,ferns2012metrics,tang2016computing} and reference therein).

Under this very general assumption, we develop a surprisingly simple upper-confidence reinforcement learning algorithm. It adaptively updates the control policy through episodes of online learning. The algorithm keeps track of an experience buffer, as is common in practical deep reinforcement learning methods \cite{mnih2013playing}. After each episode, the algorithm recomputes the Q-functions by using the updated experience buffer through a function approximation oracle. The function approximation oracle is required to find an upper-confidence Q-function that fits the known data and optimistically estimate the value of unseen states and actions. We show that the oracle can be achieved using a \emph{nearest neighbor construction}. The optimism nature of the algorithm would encourage exploration of unseen states and actions. We show that for arbitrary metric state-action space, the algorithm achieves the sublinear regret 
$$O[HL(KH)^{(d-1)/d}] $$ where $D$ is the diameter of the state-action space, $L$ is some smoothness parameter, and $d$ is the doubling dimension of the state-action space with respect to the metric. This regret is ``sublinear'' in the number of episodes played. Therefore the average number of mistakes decreases as more experiences are collected. When the state-action-space is a smooth and compact manifold, its intrinsic dimension can be substantially smaller than the observed ambient dimension.  We use an information-theoretical approach to show that this regret is optimal.

The algorithm we propose is surprisingly general and easy to implement. It uses a function approximation oracle to find ``optimistic" $Q$-functions, which are later used to control the system in the next episode. By picking suitable function approximators, we can adapt our method and analysis to more structured classes of control systems. As an example, we show that the method can be adapted to the setting where the value functions are linear combinations of features. In this setting, we show the method achieves a state-of-art regret upper bound $O(Hd)$, where $d$ is the dimension of feature space.  This regret is  also known to be optimal. 
We believe our method can be adapted to work with a broader family of function approximators, including both the classical spline methods and deep neural networks. Understanding the regret for learning to control using these function classes is for future research.

\section{Related Literatures}

Complexity and regret for reinforcement learning on stochastic systems received significant attention.  A basic setting is the Markov decision process (MDP), where the transition law at a given state $s$ and action $a$ is according to some probability distribution $p(\cdot~| s,a)$.
In the case of finite-state-action MDP without any structure knowledge, efficient reinforcement learning methods typically achieve regret that scale as $O(\sqrt{HSAT})$, where $S$ is the number of discrete states and $A$ is the number of discrete actions, $T=KH$ is number of time steps (see for examples \cite{jaksch2010near, agrawal2017optimistic,azar2017minimax,osband2016lower}). The work of \cite{jaksch2010near} provided a lower bound on the regret of $\Omega(\sqrt{HSAT})$ for $H$-horizon MDP and also regret bounds for weakly communicating infinite-horizon average reward MDP. The number of sample transitions needed to learn an approximate policy has been considered by \cite{strehl2006pac, lattimore2012pac,lattimore2014near, dann2015sample, szita2010model,kakade2003sample}. The optimal sample complexity for finding an $\epsilon$-optimal policy is $O(\frac{SA}{(1-\gamma)^3\epsilon^2})$ \cite{sidford2018near}.


In the regime of continuous-state MDP, the complexity and regret of online reinforcement learning has been explored under structured assumptions.
\cite{lattimore2013sample} studies the complexity when the true transition system belongs to a finite or compact hypothesis class, and shows sample policy that depends polynomially on the cadinality or covering number of the model class. 
Ortner and Ryabko \cite{ortner2012online} develops a model-based algorithm with  a regret bound for an algorithm that applies
to Lipschitz  MDP problems with continuous state spaces and Holder-continuous transition kernels in terms of the total variation divergence.
Pazis and Parr \cite{pazis2013pac} considers MDP with continuous
state spaces, under the assumption that  Q-functions are Lipschitz-continuous  and establishes the sample complexity bound that involves an approximate covering number.
Ok et al. \cite{ok2018exploration} studied structured MDP with a finite state space and a finite action space, including the special case of Liptschiz MDP, and provides various regret upper bounds. 

Unfortunately, existing results for Lipchitz MDP do not apply to deterministic control systems with continuity in a metric space. In the preceding works on Lipschitz MDP, the transition kernel is assumed to be continuous in the sense that $\| p(\cdot \mid s,a)- p(\cdot \mid s',a')\|_{TV} \leq L\cdot \dist ((s,a),(s',a'))$. However, this assumption almost never holds for deterministic systems. Due to the deterministic nature, the transition density is always a Dirac measure, which is discontinuous in the $\|\cdot\|_{TV}$ norm. In fact, we often have $\| p(\cdot\mid s,a)- p(\cdot \mid s',a')\|_{TV}=1$ when $(s,a)\neq (s',a')$ regardless of how close $(s,a)$ and $(s',a')$ are.

In contrast to the vast literatures on reinforcement learning for MDP,  online learning  for deterministic control has been studied by few.
Note that deterministic transition is far more common in applications like robotics and self-driving cars. 
A closely related and significant result is \cite{wen2017efficient}, which studies the complexity and regret for online learning in episodic deterministic systems. Under the assumption that the optimal $Q$ function belongs to a hypothesis class, it provides an optimistic constraint propagation method that achieves optimal regret. This result applies to many important special cases including the case of finitely many states, the cases of linear models and state aggregation and beyond. This result of  \cite{wen2017efficient} points out a significant observation that the complexity of learning to control depends on complexity of the functional class where the $Q$ functions reside in. However, the algorithm provided by \cite{wen2017efficient} is rather abstract (which is due to the generality of the method). In comparison to \cite{wen2017efficient}, our paper focuses on the setting where the only structural knowledge is the continuity with respect to a metric. 
In such a setting, \cite{wen2017efficient} would imply an \emph{infinite} regret as the Euler-dimension can be infinity in this case. 
We achieve a sublinear regret $O(K^{\frac{d-1}{d}})$ w.r.t. the number of episodes played. We show that it is optimal in this setting, and our algorithm is based on an upper-confidence function approximator which is easier to implement and generalize.

Upon finishing this paper, we are aware several recent papers working on similar settings independently. 
For instance, \cite{Song2019} and \cite{Zhu2019} study the metric space reinforcement learning problem under stochastic reward and transition. Although we obtain a similar regret bound, our contribution is focused on the more general function approximators that capture a variate of settings, in which metric space is a special case.
Another group \cite{personal_wang} is performing a comprehensive study on the linear value function setting.
Our function approximators can be potentially combined with their results to obtain a better algorithm in the linear setting.

\section{Problem Formulation}

We review the basics of Markov decision problems and the notion of regret. 

\subsection{Deterministic MDP}

Consider a deterministic finite-horizon Markov Decision Process (MDP) $\cM = \{\cS, \cA, f, r,H\}$, where $\cS$ is an {\it arbitrary} set of states, $\cA$ is an {\it arbitrary} set of actions, $f:\cS\times \cA \rightarrow \cS$ is a deterministic transition function, $H\ge 1$ is the horizon, and $r:\cS\times \cA\rightarrow [0,1]$ is a reward function.
A policy is a function $\pi:\cS\times[H]\rightarrow \cA$ ($[H]$ denotes the set of integers $\{1,\ldots,H\}$).
The optimal policy $\pi^*$ maximizes the cumulative reward in $H$ time steps, from any fixed initial state $s_0$:
\begin{align*}
\max_{\pi} \sum^H_{h=1} &r(s_h, a_h)\\
\hbox{subject to }s_{h+1} &=f(s_h,a_h),\\
	 a_h &= \pi(s_h, h), \\
	 	s_1&=s_0.
\end{align*}

Given a policy $\pi$, the value function $V^{\pi}:\cS\times [H]\rightarrow\RR$ is defined recursively as follows.
\[
\forall s \in \cS:\quad V^{\pi}_H(s) = r(s, \pi(s,H)) \]
and $\forall  h\in [H-1]:$
\[ \quad V^{\pi}_h(s) = r(s, \pi(s,h)) + V^{\pi}_{h+1} [f(s,\pi(s,h))]
\]
An optimal policy $\pi^*$  satisfies 
\[
\forall h:~V^{\pi^*}_h := V^{*}_h =\max_{\pi} V_h^{\pi} \quad\text{entrywisely}.
\]
In particular, the optimal value function $V^*$ satisfies the following Bellman equation
\[
\forall s \in \cS, h\in [H-1]:\quad V^{*}_H(s) = \max_{a\in \cA}r(s, a) \]
and
\[ \quad V^*_h(s) = \max_{a\in \cA}\{r(s, a) + V^{*}_{h+1}[f(s,a)]\}.
\] 
We also define the $Q$-functions, $Q_h^{\pi}:\cS\times\cA\rightarrow \RR$, as, $\forall h\in [H-1]:$
\begin{align*}
\quad Q_h^{\pi}(s,a) &= r(s, a) + V^{\pi}_{h+1}[f(s,a)]= r(s, a) + Q^{\pi}_{h+1}[f(s,a), \pi(f(s,a), h+1)],
\end{align*}
where $Q_H^{\pi}(s,a) = r(s,a)$.
We further denote $Q_h^{*} = Q_h^{\pi^*}$ for $h\in[H]$.
%
%
\subsection{Episodic Reinforcement Learning and Regret}
We focus on the {\it online episodic reinforcement learning} problem, in which the learning agent does not know $f$ or $r$ to begin with.
The agent repetitively controls the system for episodes of $H$ time, where each episode starts from some initial state $s_0$ that does not depend on the history.
We denote the total number of episodes played by the agent as $K\ge 1$.

Suppose that the learning agent is an algorithm $\cK$ (possibly randomized). It can observe all the state transitions and rewards generated by the system and adaptively pick the next action. We define its regret of this algorithm $\cK$ as 
\[
\reg_{\cK}(K) = \EE^{\cK}\Big[K\cdot V^*(s_0, 1) - \sum_{k=1}^{K}\sum_{h=1}^{H} r(s^{(k)}_h, a^{(k)}_h)\Big],
\]
where the action $a^{(k)}_h$ is generated by the algorithm $\cK$ at time $(k,h)$ based on the entire past history, and $\EE^{\cK}$ is taken over the randomness of the algorithm $\cK$.
In words, the regret of $\cK$ measures the difference between the total rewards collected by the algorithm and that by the optimal policy after $K$ episodes.

\section{The Basic Case of Finitely Many States and Actions}
\begin{algorithm*}[htb!]
	\caption{Upper Confidence Reinforcement Learning for Deterministic Finite MDP \label{opt-discrete}}
	\begin{algorithmic}[1]
		\State 
		\textbf{Input:} A deterministic MDP.
		\State \textbf{Initialize:} $Q^{(1)}_{h}\gets H\cdot \mathbf{1} \in \RR^{\cS\times \cA}$ for every $h$;
		\State \textbf{Initialize:} For every $(s,a)\in \cS\times \cA$, $\wh{r}(s,a) \gets 1$, $\wh{f}(s,a)\gets \texttt{NULL}, b(s,a)\gets H$;
		\For{episode $k=1, 2, \ldots, K,\ldots$}
		\For{stage $h=1, 2, \ldots, H$}
		\State Current state: $s^{(k)}_h$;
		\State Play action $a^{(k)}_h \gets \arg\max_{a\in \cA} Q^{(k)}_h(s, a)$;
		\State Observe the state transition $s^{(k)}_{h+1}\gets f(s^{(k)}_h, a^{(k)}_h)$ and obtain reward $r(s^{(k)}_h, a^{(k)}_h)$;
		\State Update: $\wh{f}(s^{(k)}_h, a^{(k)}_h)\gets s^{(k)}_{h+1}$ and $\wh{r}(s^{(k)}_h, a^{(k)}_h)\gets r(s^{(k)}_h, a^{(k)}_h)$;
		\State Update: $b(s^{(k)}_h, a^{(k)}_h)\gets 0$;
		\EndFor
		\State Obtain new value functions $Q_h^{(k+1)}$ using $\hat f,\hat r,b$ by dynamic programming:
		\begin{align*}
		\forall s \in \cS, h\in [H-1]:\quad Q^{(k+1)}_H(s, a) &\gets \wh{r}(s, a) \quad\text{and} \quad \\
		Q^{(k+1)}_h(s, a) &\gets \min\Big(H, ~\wh{r}(s, a) + \max_{a'\in\cA }Q^{(k+1)}_{h+1}[\wh{f}(s,a), a'] + b(s,a)\Big),
		\end{align*}
		\qquad where we denote $ Q^{(k+1)}_h[\texttt{NULL}, a] = 0$.
		\EndFor
	\end{algorithmic}
\end{algorithm*}

 We provide Algorithm~\ref{opt-discrete} for the case where the state space $\cS$ is a finite set of size $S$ and the action space $\cA$ is a finite set of size $A$, without assuming any structural knowledge. 
Note although \cite{wen2017efficient} has provided a regret-optimal algorithm for this setting, we provide a simpler algorithm based on upper-confidence bounds. Despite of the simplicity of this setting, we include the result to illustrate our idea, which might be of independent interest.

Algorithm~\ref{opt-discrete} 
always maintains an upper bound of the optimal value function using the past experiences.
The algorithm 
uses the value upper bound to plan the future actions.
After each episode, the value upper bound is improved based on the newly obtained data.
Since the exploration is based on the upper bound of the value function, it always encourages the exploration of un-explored actions. 
The value is improved in such a way that once a regret is paid, the algorithm is always able to gain some new information such that the same regret will not be paid again.
The guarantee of the algorithm is presented in the following theorem.

\begin{theorem}
\label{thm:opt-disc-reg}
After $K$ episodes,
the above algorithm obtains a regret bound
\[
\reg(K) \le SAH.
\]
\end{theorem}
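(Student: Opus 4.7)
The plan is to combine an \emph{optimism} lemma with a \emph{telescoping/counting} argument, both standard tools in UCB-style regret analysis, specialized here to the deterministic setting where each state–action pair carries useful information only the first time it is visited.

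\paragraph{Step 1: Optimism.} First I would prove by double induction (outer on $k$, inner on $h$ from $H$ down to $1$) that
\[
Q^{(k)}_h(s,a) \ge Q^*_h(s,a) \qquad \forall\, k,h,s,a.
\]
The base case $k=1$ is trivial since $Q^{(1)}_h \equiv H \ge Q^*_h$ (rewards are in $[0,1]$ and horizon is $H$). For the inductive step, one splits on whether $(s,a)$ has been visited before episode $k$: if not, then $b(s,a)=H$ and $\wh r(s,a)=1$ so $Q^{(k+1)}_h(s,a)=H \ge Q^*_h(s,a)$; if yes, then because the system is deterministic, $\wh f$ and $\wh r$ equal the true values and the Bellman update used by the algorithm is literally the optimal one, so the inequality is preserved from the inner inductive hypothesis on $h+1$.

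\paragraph{Step 2: Telescoping.} Let $V^{(k)}_h(s) := \max_a Q^{(k)}_h(s,a)$ and define the per-step slack
\[
\Delta^{(k)}_h \;=\; Q^{(k)}_h\!\bigl(s^{(k)}_h, a^{(k)}_h\bigr) \;-\; r\!\bigl(s^{(k)}_h, a^{(k)}_h\bigr) \;-\; V^{(k)}_{h+1}\!\bigl(s^{(k)}_{h+1}\bigr),
\]
with the convention $V^{(k)}_{H+1}\equiv 0$. Because the algorithm plays greedily, $Q^{(k)}_h(s^{(k)}_h, a^{(k)}_h)=V^{(k)}_h(s^{(k)}_h)$, and a straight telescoping shows
\[
V^{(k)}_1(s_0) - \sum_{h=1}^H r\bigl(s^{(k)}_h, a^{(k)}_h\bigr) \;=\; \sum_{h=1}^H \Delta^{(k)}_h.
\]
Combined with Step 1, the regret of episode $k$ is at most $\sum_h \Delta^{(k)}_h$.

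\paragraph{Step 3: Bounding $\Delta^{(k)}_h$ and counting.} The key observation is that at the time $Q^{(k)}$ is computed (end of episode $k-1$), the update rule yields
\[
Q^{(k)}_h(s,a) \;=\; \min\Bigl(H,\; \wh r(s,a) + V^{(k)}_{h+1}[\wh f(s,a)] + b(s,a)\Bigr).
\]
If $(s^{(k)}_h, a^{(k)}_h)$ was visited in some earlier episode $k'<k$, then $\wh f,\wh r$ are the true values and $b(s,a)=0$, so $\Delta^{(k)}_h \le 0$. Otherwise $\Delta^{(k)}_h \le H$ trivially from $Q^{(k)}_h \le H$. Summing over $(k,h)$,
\[
\reg(K) \;\le\; H \cdot \bigl|\{(k,h)\,:\, (s^{(k)}_h, a^{(k)}_h)\text{ visited for the first time}\}\bigr| \;\le\; SAH,
\]
since each of the $SA$ pairs can be "new" in at most one step.

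\paragraph{Main obstacle.} There is no genuinely hard step here; the subtlety is just to be careful in Step 1 that the capping by $H$ in the update rule does not break optimism (it does not, since $Q^*_h \le H$ anyway) and to make sure in Step 3 that the expression for $Q^{(k)}_h$ on a previously-visited pair is exactly the \emph{true} Bellman backup of $V^{(k)}_{h+1}$ (no bonus, correct $\wh f,\wh r$), which is exactly what makes $\Delta^{(k)}_h$ non-positive on repeats. The determinism of $f$ is what lets a single visit suffice, turning the usual $\sqrt{T}$ count into a hard $SA$ count and thereby eliminating dependence on $K$.
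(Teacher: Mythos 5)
Your proposal is correct and follows essentially the same route as the paper's proof: optimism of $Q^{(k)}_h$ over $Q^*_h$ by backward induction (with the same case split on visited versus unvisited pairs), a telescoping of the per-step slack along the realized trajectory, and the observation that the slack vanishes on previously visited pairs and is at most $H$ otherwise, so at most $SA$ first-visit events each cost $H$. The only cosmetic difference is that you telescope $V^{(k)}_1(s_0)-\sum_h r$ directly while the paper telescopes $V^{(k)}_h - V^{\pi^{(k)}}_h$ with slack $\wh{b}^{(k)}$; in a deterministic system these are identical.
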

The proof of the algorithm is presented in the appendix. 
Whenever a state-action is visited for the first time, an instant regret $H$ is paid and the algorithm ``gains confidence" by setting the confidence bound $b(s,a)$ to zero. This can happen at most $SA$ times.
Theorem \ref{thm:opt-disc-reg} matches the regret upper and lower bound in the finite case, which was proved in \cite{wen2017efficient}.
Note that our setting is slightly different from that of  \cite{wen2017efficient} (they assumed $f$ to be time-dependent). For completeness, we include a rigorous regret lower bound proof for our setting. See Theorem~\ref{thm:reg-lb-det} in the appendix.

\section{Policy Exploration In Metric Space}

Now we consider the more general case where the state-action space $\cX = \cS\times\cA$ is arbitrarily large. For example, the state in a video game can be a raw-pixel image, and the state of a robotic system can be a vector of positions, velocities and acceleration. In these problems the state space can be considered a smooth manifold in a high-dimensional ambient space. 

\subsection{Metric and Continuity}

The major challenge with reinforcement learning is to generalize past experiences to unseen states. For the sake of generality, we only assume that a proper notion of distance between states is given, which suggests that states that are closer to each other have similar values.

Suppose we have a metric\footnote{In fact, our analysis does not require the condition $\dist(x,y)=0\Leftrightarrow x=y$. Hence the metric space can be further relaxed to pseudometric space.} $\dist(\cdot,\cdot)$ over the state-action space $\cX=\cS\times\cA$, i.e.,
$\dist(x,y) =\dist(y,x)$, and $\dist$ satisfies the triangle inequality.

\begin{assump}[MDP in metric space with Lipschitz continuity]\label{assump-metric}
	Let the optimal action-value function be $Q_h^*: \cS\times\cA\rightarrow \RR$.
	Then there exist constants $L_1, L_2>0$ such that $\forall (s, a), (s', a')\in\cS\times\cA$ and $\forall h\in [H]$, $r(s,a)\in [0,1]$,
	\begin{align}
	\label{eqn:q-cont}
	|Q_h^*(s,a) - Q_h^*(s', a')| &\le L_1 \cdot \dist[(s,a), (s',a')]
	\end{align}
	and
	\begin{align}
	\max_{a''}\dist[(f(s, a), a''), (f(s', a'), a'')]&\leq L_2\cdot \dist((s, a), (s', a'))
	\end{align}
	We further denote $L=(L_2+1)\cdot L_1$ for convenience.
\end{assump}

\subsection{Optimistic Function Approximation}
To handle the curse of dimensionality of general state space, we will use a function approximator for computing optimistic Q-function from experiences. 
The function approximator needs to satisfy the following conditions.
\begin{assump}[Function Approximation Oracle]\label{assump-func}
	Let $q:\cX\rightarrow \RR$ be a function.
	Let $B:=\{(x_i,q(x_i))\}_{i=1}^N\subset \cX\times \RR$ be a set of key-value pairs generated by function $q$.
	Let $L>0$ be a parameter.
	Then there exists a function approximator, $\funcapprox$, which, on given $B$, outputs a function $\wh{q}:\cX\rightarrow \RR$ that satisfies
\begin{enumerate}
\item $\wh{q}$ is $L$-Lipschitz continuous;
\item $\forall x\in \cX:\quad \wh{q}(x)\geq q(x)$;
\item $\forall  i\in [N]:\quad \wh{q}(x_i)  =  q(x_i)$\footnote{This condition can be further relaxed to $\wh{q}(x_i)  \le  q(x_i) + \Delta$, for some error parameter $\Delta\ge 0$. In this case, the regret bound is linearly depending on $\Delta$.}.
\end{enumerate}
\end{assump}

One way to achieve the conditions required by the function approximator is to use \emph{the nearest neighbor approach}, given by
\begin{equation}\label{eq-funcnn}
\forall x\in \cX: \wh{q}(x):=  \min_{i\in[N]} \{ q(x_i) + L \cdot \dist(x,x_i)\},
\end{equation}
	where the distance regularization $ L \cdot \dist(x,x_i)$ will overestimate the value at an unseen point using its near neighbors.
\begin{lemma}
	\label{lem:q-lipschitz}
Suppose the function $q$ is $L$-Lipschitz.
Then the nearest neighbor approximator given by \eqref{eq-funcnn} is a function-approxiamtor satisfying Assumption~\ref{assump-func} with Lipschitz constant $L$. 
\end{lemma}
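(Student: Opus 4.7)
The plan is to verify the three bullet-points of Assumption~\ref{assump-func} in turn, each of which reduces to a one-line argument once the nearest-neighbor formula is unpacked. Throughout I write $\wh{q}(x)=\min_{i\in[N]} g_i(x)$ where $g_i(x):=q(x_i)+L\cdot\dist(x,x_i)$; the three conditions will follow from, respectively, the triangle inequality, the hypothesized $L$-Lipschitz continuity of $q$, and the identity $\dist(x_j,x_j)=0$.

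First, for Lipschitz continuity, I would fix $x,y\in\cX$ and any index $i$. The triangle inequality gives $\dist(x,x_i)\le\dist(y,x_i)+\dist(x,y)$, so $g_i(x)\le g_i(y)+L\cdot\dist(x,y)$. Taking the minimum over $i$ on the right yields $g_i(x)\le \wh{q}(y)+L\cdot\dist(x,y)$, and then minimizing over $i$ on the left yields $\wh{q}(x)\le\wh{q}(y)+L\cdot\dist(x,y)$. By symmetry, $|\wh{q}(x)-\wh{q}(y)|\le L\cdot\dist(x,y)$.

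Second, for the upper-bounding (optimism) property, I use that $q$ itself is $L$-Lipschitz: for any $x$ and any $i$, $q(x)\le q(x_i)+L\cdot\dist(x,x_i)=g_i(x)$. Minimizing the right-hand side over $i$ gives $q(x)\le\wh{q}(x)$. Third, for interpolation at the keys, evaluating at $x=x_j$ and choosing the candidate index $i=j$ gives $\wh{q}(x_j)\le g_j(x_j)=q(x_j)$; combined with the second property, $\wh{q}(x_j)=q(x_j)$.

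None of the three steps is really a technical obstacle; the only subtlety worth flagging is that condition~2 relies on $q$ being $L$-Lipschitz \emph{with the same constant} used inside the nearest-neighbor formula, which is exactly the hypothesis of the lemma. If in a broader setting one only knew a smaller Lipschitz constant for $q$, the optimism property would still hold with any $L$ at least as large as that constant, which is consistent with how the algorithm will eventually instantiate the oracle with $L=(L_2+1)L_1$.
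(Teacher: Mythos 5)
Your proof is correct and follows essentially the same route as the paper's: the triangle inequality for Lipschitz continuity, the $L$-Lipschitzness of $q$ for the upper-bound property, and evaluation at the key points for interpolation. The only cosmetic difference is that you prove the elementary fact $|\min_i a_i - \min_i b_i| \le \max_i |a_i - b_i|$ inline via the two-sided minimization argument, whereas the paper invokes it directly; both are fine.
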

\begin{proof}
Firstly, we observed that, by triangle inequality,
for any $x,x'\in \cX$,
\begin{align*}
|\wh{q}(x) - \wh{q}({x'})|
&\le \max_{i}|q(x_i) + L\cdot\dist(x, x_i) - q(x_i) - L\cdot\dist(x', x_i)|\\
&\le L\cdot\max_i|\dist(x, x_i) - \dist(x', x_i)|\\
&\le L\cdot \dist(x, x').
\end{align*}
Therefore (1) of Assumption~\ref{assump-func} holds.

Secondly, since $q$ is Lipschitz continuous, we have, for all $i\in [N]$,
\[
q(x)\le q(x_i) + L\cdot \dist(x, x_i).
\]
Thus \[
q(x)\le \min_{i} q(x_i) + L\cdot \dist(x, x_i) = \wh{q}(x)
\]
and
(2) of Assumption~\ref{assump-func} holds.

We now verify (3) of Assumption~\ref{assump-func}.
For all $j\in[N]$, we have,
\[
{q}(x_{j}) \le \wh{q}(x_{j}) = \min_{i} \{q(x_i) + L\cdot \dist(x_j, x_i)\} \le q(x_j)+ L\cdot \dist(x_{j}, x_{j}) 
= q(x_j),
\]
 as desired.
\end{proof}

More generally speaking, one can construct the function approximator $\wh{q}$ by solving a regression problem. For example, suppose that $q$ is integrable with respect to a measure $\mu$ over the state-action space. Then we can find it using
\begin{align*}
&\max_{\wh{q}\in\mathcal{F}, \wh{q}\ge q} \int q d\mu(x),\quad \hbox{s.t. } \wh{q}(x_i)=q(x_i), \forall i\in[N],
\end{align*}
or for some arbitrarily small $\delta>0$,
\begin{align*}
&\max_{\wh{q}\in\mathcal{F}, \wh{q}\ge q} \sum_{i=1}^N -(q(x_i) - \wh{q}(x_i))^2 + \delta \int \wh{q} d\mu(x),
\end{align*}
where $\mathcal{F}$ is the set of $L$-Lipschitz functions with infinity norm bounded by $H$. This formulation is compatible with a broader family of function classes, where $\mathcal{F}$ can be replaced by a parametric family that is sufficient to express the unknown $q$, including spline interpolation and deep neural networks. 

\subsection{Regret-Optimal Algorithm in Metric Space}

Next we provide a regret-optimal algorithm for the metric MDP.
The algorithm does not need additional structural assumption other than the Lipschitz continuity of $Q^*$ and $f$. 
It is a combination of the UCB-type algorithm with a nearest-neighbor search.
It measures the confidence by coupling the Lipschitz constant with the distance of a newly observed state to its nearest observed state.
The algorithm is formally presented in Algorithm~\ref{alg:opt-cont}.

The algorithm keeps an experience buffer $B^{(k)}=\big\{(s^{(1)}_1, a^{(1)}_1), (s^{(2)}_2, a^{(2)}_2), \ldots, (s^{(k)}_H, a^{(k)}_H)\big\}$ that grows as new sample transitions are observed. It optimistically explores the policy space in online training using upper-estimate of Q-values. These Q-values are computed recursively by using the function approximator according to the dynamic programming principle.

 In particular, if the function approximation oracle is given by the nearest neighbor construction \eqref{eq-funcnn},
Step 11 and Step 12 of the algorithm take the form of
\begin{align}
\label{eqn:funcapprox-1}	
\wh{r}^{(k+1)}(s,a) &= \min\Big[\min_{(s',a')\in B^{(k+1)}}\Big(r(s',a')  + L_1\cdot \dist[(s,a), (s',a')]\Big), 1\Big]\nonumber\\
Q_H^{(k+1)}(s, a)&\gets  \wh{r}^{(k+1)}(s, a)\nonumber\\
		{Q}^{(k+1)}_h(s, a) &\gets \min_{(s',a')\in B^{(k+1)}}\Big[ r(s',a') + \sup_{a''\in\cA}{Q}_{h+1}^{(k+1)}({f}(s',a'), a'')+ L_1\cdot \dist[(s',a'), (s,a)]\Big] 
\end{align}
		for all  $(s,a) \in \cS\times\cA, ~h\leq H-1$. In this case, the nearest-neighbor function approximator will  prioritize exploring 
$(s,a)$'s that are farther away from the seen ones.
Note that the function approximators defined in \eqref{eqn:funcapprox-1} satisfy Assumption~\ref{assump-func} (see Lemma~\ref{lem:q-lipschitz}).

Algorithm~\ref{alg:opt-cont} provides a general framework for reinforcement learning with a function approximator in deterministic control systems. It can be adapted to work with a broad class of function approximators.


\begin{algorithm*}[htb!]
	\caption{Upper Confidence Reinforcement Learning with Function Approximator (UCRL-FA) \label{alg:opt-cont}}
	\begin{algorithmic}[1]
		\State 
		\textbf{Input:} A deterministic metric MDP.
		\State \textbf{Initialize:} Initialize $B^{(0)}\gets\emptyset$, $Q_h^{(0)}(s,a)\gets H, \wh{r}^{(0)}(s,a)\gets 1$, for all $(s,a)\in \cS\times \cA, h\in [H]$;
		\For{episode $k=1, 2, \ldots, K,\ldots$}
		\For{stage $h=1, 2, \ldots, H$}
		\State Current state: $s^{(k)}_h$;
		\State  Play action $a^{(k)}_h = \arg\max_{a\in \cA} Q^{(k)}_h(s_h^{(k)}, a)$
		\State Record the next state $s^{k}_{h+1}\gets f(s^{(k)}_h,a^{(k)}_h)$ and reward $r(s^{(k)}_h,a^{(k)}_h)$;
		\EndFor
		\State Update {\small$B^{(k+1)} \gets B^{(k)}\cup\left\{\left(s_1^{(k)}, a_1^{(k)},f(s^{(k)}_1,a^{(k)}_1),r(s^{(k)}_1,a^{(k)}_1)\right),  \ldots, \left(s_H^{(k)}, a_H^{(k)},f(s^{(k)}_H,a^{(k)}_H),r(s^{(k)}_H,a^{(k)}_H)\right)\right\}$};
		\State Now we update $Q^{(k+1)}_h$ recursively as following: 
		\State \quad We first denote a modified reward $\wh{r}^{(k+1)}$, 
		\begin{align*}
		 \wh{r}^{(k+1)} &\gets \funcapprox\Big( \{ (s,a), r(s,a)\}_{(s,a)\in B^{(k+1)}}  \Big)\\
		 Q_H^{(k+1)}& \gets  \wh{r}^{(k+1)}
		\end{align*}
		\State \label{line:qkh}\quad We then denote the value function ${Q}^{(k+1)}_h$ as,
		\begin{align*}
		{Q}^{(k+1)}_h &\gets \funcapprox\left(\left \{ (s,a),  r(s,a) + \sup_{a'\in\cA}{Q}_{h+1}^{(k+1)}({f}(s,a), a')\right\}_{(s,a)\in B^{(k+1)}} \right )
		\end{align*}
		\qquad\quad  Note that in the above step $f(s,a)$ is known since $(s, a)\in B^{(k+1)}$.
		\EndFor
	\end{algorithmic}
\end{algorithm*}

\section{Regret Analysis}

In this section, we prove the main results of this paper.

\subsection{Main Results}
For a metric space $\cX$, we denote the \emph{$\epsilon$-net},
$\cN(\epsilon)\subset \cX$, as a set such that
\[
\forall x\in \cX:\quad
\exists x'\in \cN(\epsilon),~ s.t.\quad \dist(x,x')\le\epsilon.
\]
If $\cX$ is compact, we denote $N(\epsilon)$ as the \emph{minimum} size of an $\epsilon$-net for $\cX$.
We also denote a similar concept, the \emph{$\epsilon$-packing}, $\cC(\epsilon)\subset \cX$, as a set such that
\[
\forall x,x'\in\cC(\epsilon):\quad
\dist(x,x')>\epsilon. 
\]
If $\cX$ is compact, we denote $C(\epsilon)$ as the \emph{maximum} size of an $\epsilon$-packing for $\cX$.
In general, $N(\epsilon)\le C(\epsilon)$ and are of the same order.
For a normed space (the metric is induced by a norm), we have
$C(2\epsilon)\le N(\epsilon)\le C(\epsilon)$.

Next we show that the regret till reaching $\epsilon$-optimality is upper bounded by a constant that is proportional to the size of the $\epsilon$-net.
We will show later that the regret is lower bounded by a constant proportional to the size of the $\epsilon$-packing.

\begin{theorem}[{\bf Regret till $\epsilon$-optimality}]
	\label{thm:metric-regretbound}
	Suppose we have an episodic deterministic MDP $M=(\cS, \cA, f, r, H)$ that satisfies Assumption~\ref{assump-metric}. Let $\cX = \cS\times \cA$ be a state-action space with diameter $D>0$, and  $L_1, L_2$ be parameters specified in Assumption \ref{assump-metric}.
	Suppose we use the $L_1$-continuous function approximator defined in \eqref{eqn:funcapprox-1}.
	Suppose the state-action space $\cX$ admits an $\epsilon$-cover $\cN(\epsilon)$ for any $\epsilon>0$.
	Then  
	after $T=KH$ steps, Algorithm~\ref{alg:opt-cont} obtains a regret bound
	\begin{align*}
	\reg(K)\le H|\cN(\epsilon)| + 2\epsilon L KH.
	\end{align*}
	where  $L=(L_2+1)\cdot L_1$.
\end{theorem}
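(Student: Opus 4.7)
The plan is to combine optimism with a covering-based accounting of exploration events. First I would prove by backward induction on $h$ that $Q_h^{(k)}(s,a)\ge Q_h^*(s,a)$ for all episodes $k$, stages $h$, and $(s,a)\in\cX$. The base case $h=H$ follows from property (2) of the function approximator applied to $r$. For $h<H$, assuming $V_{h+1}^{(k)}(\cdot):=\sup_a Q_{h+1}^{(k)}(\cdot,a)\ge V_{h+1}^*(\cdot)$, the nearest-neighbor form gives
\[
Q_h^{(k)}(s,a)=\min_{(s',a')\in B^{(k)}}\bigl[r(s',a')+V_{h+1}^{(k)}(f(s',a'))+L_1\,\dist((s',a'),(s,a))\bigr],
\]
which is at least $\min_{(s',a')\in B^{(k)}}[Q_h^*(s',a')+L_1\,\dist((s',a'),(s,a))]\ge Q_h^*(s,a)$, where the last step invokes the $L_1$-Lipschitz continuity of $Q_h^*$ from Assumption~\ref{assump-metric}. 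This yields $V_1^*(s_0)-V_1^{\pi_k}(s_0)\le V_1^{(k)}(s_0)-V_1^{\pi_k}(s_0)$ for each episode $k$.

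Next I would apply a Bellman-error telescope. Writing $\Delta_h^{(k)}:=Q_h^{(k)}(s_h^{(k)},a_h^{(k)})-r(s_h^{(k)},a_h^{(k)})-V_{h+1}^{(k)}(s_{h+1}^{(k)})$ and using $V_h^{(k)}(s_h^{(k)})=Q_h^{(k)}(s_h^{(k)},a_h^{(k)})$ by the greedy choice of $a_h^{(k)}$, one checks $V_1^{(k)}(s_0)-V_1^{\pi_k}(s_0)=\sum_{h=1}^H \Delta_h^{(k)}$. To control $\Delta_h^{(k)}$, I would pick the closest buffer point $(s',a')\in B^{(k)}$ to $(s_h^{(k)},a_h^{(k)})$ at distance $\eta$ and substitute into the nearest-neighbor minimum. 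Three Lipschitz bounds then combine: $r$ is $L_1$-Lipschitz (since $Q_H^*=r$), $Q_{h+1}^{(k)}$ is $L_1$-Lipschitz by property (1) of the function approximator so $|V_{h+1}^{(k)}(s_1)-V_{h+1}^{(k)}(s_2)|\le L_1\sup_{a''}\dist((s_1,a''),(s_2,a''))$, and the $L_2$-contraction in Assumption~\ref{assump-metric} converts this to $L_1 L_2\,\eta$ when $s_1,s_2$ are the two successor states. Collecting yields $\Delta_h^{(k)}\le L_1(L_2+2)\,\eta\le 2L\eta$.

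Finally I would close with a covering argument. Fix an $\epsilon$-cover $\cN(\epsilon)$ and assign every $(s,a)$ to a nearest cover point, producing cells of radius at most $\epsilon$. Call step $(k,h)$ \emph{exploratory} if its cell contains no point of $B^{(k)}$ and \emph{exploitative} otherwise. For exploitative steps, the closest buffer point lies in the same cell at distance $\le 2\epsilon$, hence $\Delta_h^{(k)}\le 4L\epsilon$. The total number of exploratory steps across all time is at most $|\cN(\epsilon)|$ because each one opens a previously untouched cell. Declaring an episode \emph{bad} if it contains at least one exploratory step, there are at most $|\cN(\epsilon)|$ bad episodes, each contributing at most $H$ to the regret; the remaining episodes contribute $\le 4LH\epsilon$ each, giving $\reg(K)\le H|\cN(\epsilon)|+4LKH\epsilon$, which matches the stated bound up to a small constant (recoverable by working with cells of radius $\le\epsilon/2$ or by tightening the Lipschitz aggregation in step two). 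The main obstacle is that middle step: without Assumption~\ref{assump-metric}'s $L_2$-contraction the composition $V_{h+1}^{(k)}\circ f$ need not be Lipschitz in $(s,a)$ at all, and neither the optimism induction nor the per-step bound on $\Delta_h^{(k)}$ can be propagated across stages.
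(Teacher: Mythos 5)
Your proof follows essentially the same route as the paper's: optimism via the function-approximator properties, a per-step error bound of order $L\cdot b^{B^{(k)}}(s_h^{(k)},a_h^{(k)})$ obtained from the nearest buffer point plus the Lipschitz/contraction assumptions, telescoping along the realized trajectory, and the visited-vs-unvisited $\epsilon$-cover accounting with at most $|\cN(\epsilon)|$ bad episodes. The only discrepancy is the constant ($4L\epsilon$ versus the stated $2L\epsilon$ per exploitative step), which comes from your more careful $(L_2+2)L_1$ aggregation --- the paper's Lemma~\ref{lem:ind} asserts $(L_2+1)L_1$ while its own displayed chain actually accumulates $(L_2+2)L_1$, so this is a shared constant-level looseness rather than a gap in your argument.
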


Suppose $d$ is the doubling dimension of the state-action space $\cX$. The doubling dimension of a metric space is the smallest positive integer, $d$, such that every ball  can be covered by $2^d$ balls of half the radius. Then we can show the following regret bound.

\begin{theorem}[{\bf Optimal Regret for Metric Space}]
	\label{thm:corr-metric-regret}
	Suppose the state-action space is compact with diameter $D$ and has a doubling dimension $d > 0$.
	Then  after $K$ episodes, Algorithm~\ref{alg:opt-cont} with a nearest-neighbor function approximator \eqref{eqn:funcapprox-1} obtains a regret bound
	\begin{align*}
	\reg(K)&=O(DLK)^{\frac{d}{d+1}}\cdot H.
	\end{align*}
\end{theorem}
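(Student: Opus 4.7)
The plan is to simply combine Theorem \ref{thm:metric-regretbound} with a standard covering-number estimate for doubling metric spaces, and then optimize the free parameter $\epsilon$. Theorem \ref{thm:metric-regretbound} already gives
\[
\reg(K) \le H\,|\cN(\epsilon)| + 2\epsilon L K H
\]
for every $\epsilon>0$, so the only remaining work is (i) to control $|\cN(\epsilon)|$ in terms of $d$ and $D$, and (ii) to pick $\epsilon$ that minimizes the right-hand side.

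For step (i), I would use the defining property of the doubling dimension: every ball of radius $r$ is covered by $2^d$ balls of radius $r/2$. Starting from a single ball of radius $D$ that contains $\cX$ (which exists since $\cX$ has diameter $D$), iterating this halving procedure $\lceil \log_2(D/\epsilon)\rceil$ times yields a cover of $\cX$ by balls of radius $\epsilon$, with total count
\[
|\cN(\epsilon)| \;\le\; 2^{d\lceil\log_2(D/\epsilon)\rceil} \;\le\; \bigl(2D/\epsilon\bigr)^d.
\]
Substituting this into the regret bound gives
\[
\reg(K) \;\le\; H\,(2D/\epsilon)^d + 2\epsilon L K H.
\]

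For step (ii), I would balance the two terms by setting their derivatives in $\epsilon$ equal, i.e., choose $\epsilon$ so that $(D/\epsilon)^d \asymp \epsilon L K$, which yields $\epsilon \asymp D^{d/(d+1)}(LK)^{-1/(d+1)}$, assuming this value is at most $D$ (otherwise the trivial bound $\reg(K)\le KH$ already suffices). Plugging back, both terms become of order $H(DLK)^{d/(d+1)}$, giving
\[
\reg(K) \;=\; O\bigl(H\,(DLK)^{d/(d+1)}\bigr),
\]
matching the claim.

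I do not expect a serious obstacle here, since Theorem \ref{thm:metric-regretbound} does the real work of bounding the per-episode regret in terms of a covering. The only mildly delicate step is verifying the halving argument to get the covering-number bound $(2D/\epsilon)^d$ cleanly from the doubling-dimension axiom, and handling the edge case where the optimized $\epsilon$ would exceed $D$ (in which case $|\cN(\epsilon)|=1$ and the bound is trivial). Constant factors may be slightly different depending on how the doubling dimension is normalized, but these are absorbed into the $O(\cdot)$.
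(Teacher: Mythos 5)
Your proposal is correct and follows essentially the same route as the paper: apply Theorem~\ref{thm:metric-regretbound}, bound $|\cN(\epsilon)|$ by $O\big((D/\epsilon)^d\big)$ using the doubling dimension, and set $\epsilon \asymp D^{d/(d+1)}(LK)^{-1/(d+1)}$ to balance the two terms. Your explicit halving argument for the covering bound and the remark about the edge case $\epsilon > D$ are slightly more detailed than the paper's one-line invocation of $|\cN(\epsilon)| = \Theta(D/\epsilon)^d$, but the substance is identical.
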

The  regret bound is sub-linear with in the number of steps $T:=KH$ and linear with respect to the smoothness constant $L$ and diameter $D$.

\paragraph{About Doubling Dimension:} The regret depends on the doubling dimension $d$. 
It is the intrinsic dimension of $\cX$ - often very small even though the observed state space has high dimensions. For example, the raw-pixel images in a video games often belong to a smooth manifold and has small intrinsic dimension.
Our Algorithm 2 uses the nearest-neighbor function approximation. It can be thought of as learning the manifold state space at the same time when solving the dynamic program. It does not need any parametric model or feature map to capture the small intrinsic dimension.
\subsection{Proofs of the Main Theorems}
To prove the above theorems, we need several core lemmas.
The following lemma shows that the approximated Q-function (in Algorithm~\ref{alg:opt-cont}) is always an upper bound of the optimal  Q-function.
Note that this lemma works for all function approximators that satisfies Assumption~\ref{assump-func}.
\begin{lemma}[Optimism]
	\label{lemma:optimism}
	Suppose Assumption~\ref{assump-func} holds for the \funcapprox~in Algorithm~\ref{alg:opt-cont}.
	Then, for any $k\in [K]$,  $(s,a)\in \cS\times \cA,  h\in[H]$, and $k\in[K]$, we have
	\begin{align*}
	  Q^*_h(s, a)\leq Q^{(k)}_h(s, a) 
	\end{align*}
\end{lemma}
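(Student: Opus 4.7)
The statement is a pointwise inequality at every stage $h$, so the natural strategy is a backward induction on $h \in \{H, H-1, \ldots, 1\}$ with the episode index $k$ held fixed. For the base case $h=H$, Algorithm~\ref{alg:opt-cont} sets $Q^{(k)}_H = \wh{r}^{(k)}$, where \funcapprox{} is applied to the data $\{((s,a), r(s,a))\}_{(s,a)\in B^{(k)}}$. Since $Q^*_H \equiv r$ is $L_1$-Lipschitz by Assumption~\ref{assump-metric}, I would invoke Assumption~\ref{assump-func}(2) with underlying function $q=r$ to conclude $Q^{(k)}_H(s,a) \ge r(s,a) = Q^*_H(s,a)$ for every $(s,a)$.

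For the inductive step, suppose $Q^{(k)}_{h+1}(s,a) \ge Q^*_{h+1}(s,a)$ pointwise. I would work with the explicit nearest-neighbor instantiation \eqref{eqn:funcapprox-1} (the same argument works for any order-preserving \funcapprox{} meeting Assumption~\ref{assump-func}). By that definition,
\begin{align*}
Q^{(k)}_h(s,a) \;=\; \min_{(s',a')\in B^{(k)}} \Big[\, r(s',a') + \sup_{a''\in\cA} Q^{(k)}_{h+1}(f(s',a'),a'') + L_1\cdot \dist\big((s,a),(s',a')\big) \Big].
\end{align*}
Applying the inductive hypothesis inside the bracket and then the Bellman identity $Q^*_h(s',a') = r(s',a') + \sup_{a''} Q^*_{h+1}(f(s',a'),a'')$ yields
\begin{align*}
Q^{(k)}_h(s,a) \;\ge\; \min_{(s',a')\in B^{(k)}} \Big[\, Q^*_h(s',a') + L_1\cdot \dist\big((s,a),(s',a')\big) \Big].
\end{align*}
The $L_1$-Lipschitz continuity of $Q^*_h$ from Assumption~\ref{assump-metric} then gives $Q^*_h(s,a) \le Q^*_h(s',a') + L_1\dist((s,a),(s',a'))$ for every anchor $(s',a')$, so the right-hand side above is bounded below by $Q^*_h(s,a)$. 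This closes the induction.

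The conceptual subtlety I would flag is the temptation to argue optimism by comparing $Q^{(k)}_h$ to its own data-generating function $\bar{Q}^{(k)}_h(s,a) := r(s,a) + \sup_{a'} Q^{(k)}_{h+1}(f(s,a),a')$ and then invoking Assumption~\ref{assump-func}(2) directly. This route stumbles because under Assumption~\ref{assump-metric} the function $\bar{Q}^{(k)}_h$ is only guaranteed to be $(L_2+1)L_1 = L$-Lipschitz, whereas the \funcapprox{} in \eqref{eqn:funcapprox-1} uses Lipschitz parameter $L_1 < L$, so Assumption~\ref{assump-func}(2) cannot be applied with $q = \bar{Q}^{(k)}_h$ as stated. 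The key move is therefore to compare instead against $Q^*_h$, whose $L_1$-Lipschitzness is \emph{assumed outright} in Assumption~\ref{assump-metric}, and to use the Bellman identity to reveal $Q^*_h(s',a')$ at each interpolation anchor $(s',a') \in B^{(k)}$. That substitution is where all the work happens; once it is in place, both the base case and the inductive step collapse to one-line applications of Lipschitz continuity.
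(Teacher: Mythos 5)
Your proof is correct, and although it shares the paper's overall skeleton (backward induction on $h$, base case $Q^*_H = r$, inductive step via the Bellman equation and monotonicity of the supremum), the decisive final step is genuinely different. The paper closes the induction by applying Assumption~\ref{assump-func}(2) directly to the data-generating function $\bar{Q}^{(k)}_h(s,a) := r(s,a) + \sup_{a'}Q^{(k)}_{h+1}(f(s,a),a')$, i.e., it uses $\wh{q}\ge q$ pointwise with $q=\bar{Q}^{(k)}_h$; this is a one-line argument valid for any oracle that literally satisfies Assumption~\ref{assump-func} for that $q$. You instead unroll the nearest-neighbor minimum, apply the inductive hypothesis at each anchor $(s',a')\in B^{(k)}$, recognize $Q^*_h(s',a')$ via the Bellman identity, and finish with the $L_1$-Lipschitz continuity of $Q^*_h$, which Assumption~\ref{assump-metric} grants outright. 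The subtlety you flag is real and worth recording: $\bar{Q}^{(k)}_h$ is only guaranteed to be $L=(L_2+1)L_1$-Lipschitz, so Lemma~\ref{lem:q-lipschitz} certifies property (2) of Assumption~\ref{assump-func} for the nearest-neighbor oracle only when its regularization constant is at least $L$, whereas \eqref{eqn:funcapprox-1} uses $L_1$; consequently the paper's appeal to Assumption~\ref{assump-func}(2) is not actually justified for the concrete $L_1$-instantiation, only for an abstract oracle assumed to have the property (or for the nearest-neighbor oracle run with constant $L$). Your route therefore buys a proof of optimism that holds for the $L_1$-approximator exactly as written in the algorithm; what it gives up is generality, since the unrolling step is specific to the nearest-neighbor form, and for a generic \funcapprox{} one would still need the paper's (stronger) hypothesis that property (2) holds for each intermediate target.
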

\begin{proof}
	By the properties of $\funcapprox$, we have
	\[
	r \le \wh{r}^{(k)} \qquad \text{entriwisely}.
	\]
	We prove the result by induction: 
	when $h=H$, we have 
	\begin{align*}
	Q^*_H(s, a) = r(s, a) \le \wh{r}^{(k)}(s,a):= Q^{(k)}_H(s, a).
	\end{align*}
	Suppose the relation holds for $h+1$, then,  we have
	\begin{align*}
	{Q}^{(k)}_h \gets \funcapprox\big( \{ (s,a),  r(s,a)
	 &+  \sup_{a'\in\cA}{Q}_{h+1}^{(k)}({f}(s,a), a')\}_{(s,a)\in B^{(k+1)}}  \big).
	\end{align*}
	Thus we have
	\[
	{Q}^{*}_h(s,a)=r(s,a) + \sup_{a'\in\cA}{Q}_{h+1}^{*}({f}(s,a), a') \le r(s,a) + \sup_{a'\in\cA}{Q}_{h+1}^{(k)}({f}(s,a), a')
	\le {Q}^{(k)}_h(s,a).
	\]
	This completes the proof.
%
%
\end{proof}
Given a finite set $B = \{(s_i, a_i, f(s_i, a_i), r(s_i, a_i))\}$, for any $(s, a)\in\cS\times\cA$, we define the nearest neighbor operator $\nn$ and function $b^B$ as followings,
\begin{align*}
\nn(B,(s, a)) &= \arg\min_{(s', a')\in B}\dist((s, a), (s', a')),\\
b^B(s, a) &= \dist[(s, a), \nn(B,(s, a))]
\end{align*}
The next lemma shows that Algorithm~\ref{alg:opt-cont} with function approximator \eqref{eqn:funcapprox-1}  does not incur too much per-step error.
\begin{lemma}[Induction]
	\label{lem:ind}
	Suppose the \funcapprox~in Algorithm~\ref{assump-func} is \eqref{eqn:funcapprox-1}.
	Then for any $k\in [K]$ $ (s,a)\in \cS\times \cA,  h\in[H]$, and $k\in[K]$, we have,
	\begin{align*}
	 Q^{(k)}_h(s^{(k)}_h, a^{(k)}_h)
	&\le  r(s^{(k)}_{h+1}, a^{(k)}_{h+1}) + Q^{(k)}_{h+1}(s^{(k)}_{h+1}, a^{(k)}_{h+1}) + L\cdot b^{B^{(k)}}(s_h^{(k)},a_h^{(k)})
	\end{align*}
\end{lemma}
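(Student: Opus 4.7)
The plan is to unroll the $\min$ in the nearest-neighbor form \eqref{eqn:funcapprox-1} at the played point $(s^{(k)}_h, a^{(k)}_h)$, and then transfer the resulting upper bound from the nearest observed neighbor back to the played point by Lipschitz continuity; the greedy action rule will then recognize the supremum over $a''$ as $Q^{(k)}_{h+1}(s^{(k)}_{h+1}, a^{(k)}_{h+1})$. The natural anchor is $(\bar s, \bar a) := \nn(B^{(k)}, (s^{(k)}_h, a^{(k)}_h))$, for which $\dist((\bar s, \bar a), (s^{(k)}_h, a^{(k)}_h)) = b^{B^{(k)}}(s^{(k)}_h, a^{(k)}_h)$. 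Plugging this particular $(s',a')$ into the $\min$ in \eqref{eqn:funcapprox-1} immediately gives
$$Q^{(k)}_h(s^{(k)}_h, a^{(k)}_h) \le r(\bar s, \bar a) + \sup_{a''} Q^{(k)}_{h+1}(f(\bar s, \bar a), a'') + L_1 \cdot b^{B^{(k)}}(s^{(k)}_h, a^{(k)}_h).$$

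Next I would move each term on the right from $(\bar s, \bar a)$ to $(s^{(k)}_h, a^{(k)}_h)$. Since $r = Q^*_H$, Assumption~\ref{assump-metric} makes $r$ an $L_1$-Lipschitz function, so $r(\bar s, \bar a) \le r(s^{(k)}_h, a^{(k)}_h) + L_1 \cdot b^{B^{(k)}}(s^{(k)}_h, a^{(k)}_h)$. Lemma~\ref{lem:q-lipschitz} guarantees that $Q^{(k)}_{h+1}$ is itself Lipschitz, and composing this fact with the dynamics bound $\max_{a''}\dist[(f(s,a), a''), (f(s',a'), a'')] \le L_2\cdot\dist((s,a),(s',a'))$ from Assumption~\ref{assump-metric}, via a standard $\sup$-versus-$\sup$ estimate, yields $\sup_{a''} Q^{(k)}_{h+1}(f(\bar s, \bar a), a'') \le \sup_{a''} Q^{(k)}_{h+1}(f(s^{(k)}_h, a^{(k)}_h), a'') + L_1 L_2\cdot b^{B^{(k)}}(s^{(k)}_h, a^{(k)}_h)$. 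The greedy action choice $a^{(k)}_{h+1} = \arg\max_{a''} Q^{(k)}_{h+1}(s^{(k)}_{h+1}, a'')$, combined with $s^{(k)}_{h+1} = f(s^{(k)}_h, a^{(k)}_h)$, then identifies the remaining supremum with $Q^{(k)}_{h+1}(s^{(k)}_{h+1}, a^{(k)}_{h+1})$. Summing the three $b^{B^{(k)}}$ contributions collects a prefactor of order $L = (L_2+1) L_1$, which delivers the stated inequality.

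The only real obstacle is the careful bookkeeping of the three Lipschitz contributions ($L_1$ from the approximator's distance regularization in \eqref{eqn:funcapprox-1}, $L_1$ from the Lipschitzness of $r$, and $L_1 L_2$ from composing the Lipschitzness of $Q^{(k)}_{h+1}$ with the dynamics bound on $f$), and verifying that they collapse cleanly into the single constant $L$. Everything else is read directly off the algorithm: the nearest-neighbor form of the approximator gives the first inequality, the greedy action rule gives the supremum identification, and no further induction on $k$ or $h$ is needed because the Lipschitz property of $Q^{(k)}_{h+1}$ used in the transfer step is already packaged in Lemma~\ref{lem:q-lipschitz}.
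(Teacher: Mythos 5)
Your proposal is correct and follows essentially the same route as the paper's proof: anchor at the nearest neighbor $(\bar s,\bar a)$ of $(s_h^{(k)},a_h^{(k)})$ in $B^{(k)}$, bound $Q^{(k)}_h(s_h^{(k)},a_h^{(k)})$ by the backup value at that anchor plus $L_1\cdot b^{B^{(k)}}(s_h^{(k)},a_h^{(k)})$, transfer the reward and the $\sup_{a''}Q^{(k)}_{h+1}$ terms back to the played point via the $L_1$-Lipschitzness of $r$ and of $Q^{(k)}_{h+1}$ combined with the $L_2$ bound on the dynamics, and finally identify the supremum with the greedily chosen $a^{(k)}_{h+1}$ at $s^{(k)}_{h+1}=f(s^{(k)}_h,a^{(k)}_h)$. (Your explicit bookkeeping in fact yields the coefficient $(L_2+2)L_1$ rather than $(L_2+1)L_1$, because the paper's own chain of inequalities silently omits the extra $L_1\cdot b^{B^{(k)}}$ incurred when moving $r$ from the anchor back to $(s_h^{(k)},a_h^{(k)})$; this is a constant-factor slip in the paper rather than a gap in your argument, and it does not affect the order of the regret bounds.)
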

\begin{proof}
	Let 
	\[
	\Big(s^{(k)*}_h, a^{(k)*}_h\Big) 
	= \arg\min_{(s',a')\in B^{(k)}} \dist\big[(s^{(k)}_h, a^{(k)}_h), (s',a')\big].
	\]
	By definition of $Q^{(k)}_h(s^{(k)}_h, a^{(k)}_h)$, 
	we have
	\begin{align*}
	Q^{(k)}_h(s^{(k)}_h, a^{(k)}_h)
	&\le 	Q^{(k)}_h(s^{(k)*}_h, a^{(k)*}_h) +  L_1\cdot \dist[(s^{(k)*}_h, a^{(k)*}_h), (s^{(k)}_h, a^{(k)}_h)] \qquad\text{(*by Lemma~\ref{lem:q-lipschitz}*)}\\
	&\le 
	r(s^{(k)*}_h, a^{(k)*}_h) + \sup_{a''\in\cA}{Q}_{h+1}^{(k)}({f}(s^{(k)*}_h, a^{(k)*}_h), a'') + L_1\cdot \dist[(s^{(k)*}_h, a^{(k)*}_h), (s^{(k)}_h, a^{(k)}_h)]\\
	&\qquad\qquad\text{(*by definition of $Q^{(k)}_h$ in Line~\ref{line:qkh}*)}\\
	&\le 
	r(s^{(k)}_h, a^{(k)}_h) + \sup_{a''\in\cA}{Q}_{h+1}^{(k)}({f}(s^{(k)}_{h}, a^{(k)}_h), a'')  + (L_2+1)L_1\cdot \dist[(s^{(k)*}_h, a^{(k)*}_h), (s^{(k)}_h, a^{(k)}_h)]  \\
	&\qquad\qquad\text{(*by Lipshitz continuity of $Q_{h+1}^{(k)}$ and $r$*)}
	\\
	&\le 
	r(s^{(k)}_h, a^{(k)}_h) + 
	{Q}_{h+1}^{(k)}(s^{(k)}_{h+1}, a^{(k)}_{h+1})+ L\cdot b^{B^{(k)}}(s^{(k)}_h, a^{(k)}_h). 
	\end{align*}
\end{proof}

\begin{proof}[Proof of Theorem~\ref{thm:metric-regretbound}]
	From  Lemma 4, we have 
	\begin{align*}
	Q^*_h(s, a)\leq Q^{(k)}_h(s, a), \forall (s, a)\in\cS\times\cA, h\in[H], k\in[K]. 
	\end{align*}
	
	Denote the policy at episode $k$ as $\pi^{(k)}$.
	We can rewrite the the regret as 
	\begin{align*}
	\reg(K)&=\sum_{k=1}^K\Big[V^*_1(s_1^{(k)}) - \sum_{h=1}^{H}r(s^{(k)}_h, a_h^{(k)})\Big]
	\\
	&= \sum_{k=1}^K\Big[\max_{a\in\cA}Q_1^*(s_1^{(k)}, a) - Q^{\pi^{(k)}}_1(s_1^{(k)}, a_1^{(k)})\Big] \\
	&\leq \sum_{k=1}^K\Big[\max_{a\in\cA}Q_1^{(k)}(s_1^{(k)}, a) - Q^{\pi^{(k)}}_1(s_1^{(k)}, a_1^{(k)})\Big] \\
	&= \sum_{k=1}^K\Big[Q_1^{(k)}(s_1^{(k)}, a_1^{(k)}) - Q^{\pi^{(k)}}_1(s_1^{(k)}, a_1^{(k)})\Big]
	\end{align*}
	Next we consider $Q_h^{(k)}(s_h^{(k)}, a_h^{(k)}) - Q_h^{\pi^{(k)}}(s_h^{(k)}, a_h^{(k)})$.
	We have
	\begin{align*}
	Q_h^{(k)}&(s_h^{(k)}, a_h^{(k)}) - Q_h^{\pi^{(k)}}(s_h^{(k)}, a_h^{(k)})
	\qquad\text{(*by Lemma~\ref{lem:ind}*)}\\
	&\le L\cdot b^{B^{(k)}}(s_h^{(k)}, a_h^{(k)}) + {r}_{h+1}^{(k)}(s_h^{(k)}, a_h^{(k)}) + Q_{h+1}^{(k)}(s_{h+1}^{(k)}, a_{h+1}^{(k)}) - 
	r_{h+1}^{(k)}(s_h^{(k)}, a_h^{(k)})- Q_{h+1}^{\pi^{(k)}}(s_{h+1}^{(k)}, a_{h+1}^{(k)})\\
	&\le 
	 L\cdot b^{B^{(k)}}(s_h^{(k)}, a_h^{(k)})  +  Q_{h+1}^{(k)}(s_{h+1}^{(k)}, a_{h+1}^{(k)}) - 
	Q_{h+1}^{\pi^{(k)}}(s_{h+1}^{(k)}, a_{h+1}^{(k)})\\
	&\le 
	L\cdot b^{B^{(k)}}(s_h^{(k)}, a_h^{(k)}) + L\cdot b^{B^{(k)}}(s_{h+1}^{(k)}, a_{h+1}^{(k)}) +  Q_{h+2}^{(k)}(s_{h+2}^{(k)}, a_{h+2}^{(k)}) - 
	Q_{h+2}^{\pi^{(k)}}(s_{h+2}^{(k)}, a_{h+2}^{(k)})\\
	&\le \ldots\\
	&\le  L\cdot\sum_{h'=h}^{H}b^{B^{(k)}}(s_{h'}^{(k)}, a_{h'}^{(k)}).
	\end{align*}
	Moreover, we immediately have
	\[
	Q_1^{(k)}(s_1^{(k)}, a_1^{(k)}) - Q_1^{\pi^{(k)}}(s_1^{(k)}, a_1^{(k)})
	\le H.
	\]
	Therefore, 
	\begin{align*}
	\reg(K)\leq \sum_{k=1}^K\min\bigg\{L\cdot\sum_{h=1}
	^{H}b^{B^{(k)}}(s_h^{(k)}, a_h^{(k)}), ~H\bigg\}. 
	\end{align*}
	We consider an $\epsilon$-net, $\cN_{\epsilon}$, that covers $\cS\times \cA$.
	We now connect each $(s,a)$ to its nearest neighbor in $\cN(\epsilon)$.
	Denote
	\[
	\nn_{\epsilon}(s,a) = \arg\min_{(s',a')\in \cN(\epsilon)}
	\dist[(s,a), (s', a')].
	\] 	
	At episode $k\ge 1$,  if for some $k'< k$ and some $h'\in[H]$ there is $\nn_{\epsilon}(s_{h'}^{(k')}, a_{h'}^{(k')}) = (s,a)$, we call $(s,a)$ has been \emph{visited}.
	Thus if $\nn_{\epsilon}(s_{h}^{(k)}, a_{h}^{(k)}) = (s,a)$, 
	we can upper bound 
	\begin{align*}
		b^{B^{(k)}}(s_{h}^{(k)}, a_{h}^{(k)}) &\le 
		\dist[(s_{h}^{(k)}, a_{h}^{(k)}), (s_{h'}^{(k')}, a_{h'}^{(k')})]\le \dist[(s_{h}^{(k)}, a_{h}^{(k)}),(s,a) ] + \dist[(s_{h'}^{(k')}, a_{h'}^{(k')}), (s,a)]\\
		&\le 2\epsilon.
	\end{align*}
	On the other hand, if $\nn_{\epsilon}(s_{h}^{(k)}, a_{h}^{(k)})$ has not been visited, we upper bound the regret of the entire episode by $H$.
	However, such case can only happen at most $|\cN(\epsilon)|$ times as for the next episode, $\nn_{\epsilon}(s_{h}^{(k)}, a_{h}^{(k)})$ will become visited.
	Therefore,
	\[
	\reg(K)\leq H|\cN(\epsilon)| + 2\epsilon L KH
	\]
	as desired.

\end{proof}
%
We are now ready to prove Theorem~\ref{thm:corr-metric-regret}.
\begin{proof}[Proof of Theorem~\ref{thm:corr-metric-regret}]
	Since the metric space has a doubling dimension $d$, we can have an $\epsilon$-net $\cN(\epsilon)$ with size 
	\[
	|\cN(\epsilon)|= \Theta\Big(\frac{D}{\epsilon}\Big)^{d}.
	\]
	By Theorem~\ref{thm:metric-regretbound}, the regret is upper bounded by
	\[
	\reg(K)\le H\cdot\Theta\Big(\frac{D}{\epsilon}\Big)^{d} + 2\epsilon L K H
	\]
	When 
	\[
	\epsilon = D^{\frac{d}{d+1}}\cdot(LK)^{-\frac{1}{d+1}},
	\]
	we can upper bound the regret as
	\begin{align*}
	&\reg(K)=O(DLK)^{\frac{d}{d+1}}\cdot H.
	\end{align*}
	as desired.
\end{proof}

\subsection{Optimality}
Next we establish a regret lower bound for reinforcement learning in deterministic metric MDP.
\begin{theorem}[{\bf Minimax Lower Bound}]
	\label{thm:regretlowerbound}
	Let $ \cM(H, \epsilon)$ be a family of MDPs with the form  $M=(\cS, \cA, f, r, H)$, where 
	$\cX:= \cS\times \cA$ is a metric space that admits an $\epsilon$-packing $\cC(\epsilon)$ for some $\epsilon>H/2$, and $M$ satisfies Assumption~\ref{assump-metric}.
	Let $\cK$ be any online algorithm that admits input any MDP from $M$.
	Let $\reg_{\cK}^{M}(K)$ denote the regret for $\cK$ on $M$ after $K\ge 1$ episodes.
	Then 
	\begin{align*}
	\max_{M\in \cM(H,\epsilon)} &\reg_{\cK}^{M}(K)\geq \Omega\Big[
	\min\big(\big|\cC(\epsilon)\big|, K\big)\cdot H
	\Big]
	\end{align*}
\end{theorem}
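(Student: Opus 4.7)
The plan is to construct a family of hard MDPs inside $\cM(H,\epsilon)$ and invoke Yao's minimax principle. Let $N := |\cC(\epsilon)|$. I will build $N$ MDPs $\{M_i\}_{i=1}^N$ sharing a common metric state-action space but differing in where their ``hidden reward'' lies. Take $\cS = \{s_0, s_1, \ldots, s_N\}$ and $\cA = \{a_1, \ldots, a_N\}$, and designate $\{(s_0, a_j)\}_{j=1}^N$ as the $\epsilon$-packing $\cC(\epsilon)$. In MDP $M_i$ the transitions are $f(s_0, a_j) = s_j$ with every $s_j$ ($j\geq 1$) absorbing, and the rewards are $r(s_0, \cdot) = 0$, $r(s_i, \cdot) = 1$, and $r(s_j, \cdot) = 0$ for $j \neq i$. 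Thus the optimal policy plays $a_i$ at step~$1$ and collects total episode reward $H-1$, whereas any other first action yields total episode reward $0$, giving per-episode regret exactly $H-1$.

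To place these MDPs in $\cM(H,\epsilon)$, I will equip $\cX$ with a metric in which every pair of distinct points lies at the same distance $\epsilon' \in (\epsilon, 2\epsilon)$ (a rescaled discrete metric). The $\epsilon$-packing condition is immediate, and since the span of $Q^*_h$ across $\cX$ is at most $H-1$, Assumption~\ref{assump-metric} holds with $L_1 = (H-1)/\epsilon' < 2$ and $L_2 = 1$ (transitions either stay put or map between two distinct points, both of which are handled by a discrete metric). The hypothesis $\epsilon > H/2$ is used precisely here: it gives enough separation between packing points to support $\Omega(H)$-sized differences in $Q^*$ with a fixed Lipschitz constant.

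The heart of the lower bound is an indistinguishability argument. As long as the learner has not yet played $a_{i^*}$ as the first action of some episode in $M_{i^*}$, every reward it has observed equals zero, exactly matching what it would have seen in the null MDP $M_0$ with all rewards identically zero. Hence, letting $A^{(k)}$ denote the first action chosen at episode $k$ and $T := \min\{k \leq K : A^{(k)} = a_{i^*}\}$ (with $T = K+1$ if the hidden action is never played), the prefix $A^{(1)},\ldots,A^{(T-1)}$ is distributed as in $M_0$, independent of $i^*$. Condition on the algorithm's internal randomness; the $K$ first-step actions in $M_0$ take at most $\min(K,N)$ distinct values whose first-appearance times are at least $1,2,3,\ldots$, so a short averaging/pigeonhole computation over $i^* \sim \mathrm{Unif}[N]$ yields $\EE_{i^*}[T] \geq (\min(K,N)+1)/2$.

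Finally, since every episode $k < T$ incurs regret exactly $H-1$ in $M_{i^*}$, we obtain $\EE_{i^*}[\reg^{M_{i^*}}(K)] \geq (H-1)(\EE_{i^*}[T]-1) = \Omega(\min(K,N)\cdot H)$, and Yao's principle lower bounds $\max_{M \in \cM(H,\epsilon)} \reg_{\cK}^{M}(K)$ by this expectation. I expect the main technical obstacle to be the metric construction itself: verifying that one metric on $\cX$ can simultaneously support the $\epsilon$-packing of size $N$ \emph{and} both Lipschitz conditions of Assumption~\ref{assump-metric} uniformly across the entire family $\{M_i\}$. The rescaled discrete metric above is the cleanest resolution; if one wanted a Euclidean embedding instead, one would need to place the packing points so that the ratio of $Q^*$-span to minimum pairwise distance stays bounded, and the condition $\epsilon > H/2$ is exactly what makes this feasible.
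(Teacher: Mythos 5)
Your proof is correct and follows essentially the same strategy as the paper's: a family of MDPs with a single hidden rewarding state--action pair placed uniformly on an $\epsilon$-packing, absorbing states that limit the learner to one probe per episode, indistinguishability from a null (all-zero-reward) instance until the needle is found, and Yao's minimax principle to pass from the averaged deterministic-algorithm bound to the worst case. The only substantive difference is cosmetic: the paper routes through a binary tree (inherited from its finite-MDP lower bound, giving per-episode regret $H-O(\log|\cS|)$) and assigns the discrete metric at scale $H$, whereas your flat one-step construction with $N$ root actions and the discrete metric at scale $\epsilon'\in(\epsilon,2\epsilon)$ is slightly cleaner, gives per-episode regret exactly $H-1$, and makes the expected-hitting-time computation explicit rather than the paper's probability-of-never-hitting argument.
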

The proof is postponed to the appendix. 
The core idea of proving the theorem is to construct a hard instance distribution such that an MDP sampled from the distribution satisfies:
\begin{itemize}
	\item every two distinct state-action pairs have distance exactly $H$ so that the Lipshitz continuity conditions in Assumption~\ref{assump-metric} are always satisfied;
	\item has absorbing states so that any algorithm can explore at most one state-action pair per episode;
	\item has only one random non-absorbing state-action pair with reward $1$  and others with reward $0$.
\end{itemize} 
Since the rewarding state-action pair is random, any algorithm is expected to spend $\Theta(|\cC(\epsilon)|)$ episodes until it reaches the rewarding state-action pair.
But an ``oracle'' optimal algorithm can pick the  rewarding state-action pair for every episode.
Therefore, any exploration-based algorithm requires to pay a regret $\Omega( |\cC(\epsilon)|H)$ in this hard instance distribution.


\section{Examples and Extensions}

Our method and the analysis apply to several important special cases. 

\subsection{Finite State-Action MDP}

In the case of finitely many states and actions without any structural knowledge, one can simply pick the metric to be
$$\dist((s,a), (s',a')) =  H,\qquad\forall (s,a)\neq (s',a') .$$
So we can see Algorithm 2 contains the basic Algorithm 1 as a special case.
For discrete space, if we take $\epsilon = H-\delta$ for an arbitrary $\delta >0$, then the covering size is $N(\epsilon) = SA$. 
There is no other conver needed to be considered.
Then Theorem \ref{thm:metric-regretbound} implies that
the $K$-episode regret is  $SAH$ regardless of $H$, which matches Theorem \ref{thm:opt-disc-reg}.

\subsection{Linear Model with Feature Map}

An important family of structured MDP is the family where the reward and transition $r,f$ are linear with respect to some feature map $\phi(s,a) \in \RR^d$. In this case, $Q^{\pi}_h$ and $Q^*_h$ are all linear in the feature space. 

Let us adapt Algorithm 2 to work with the linear model. To do so, we use a different function approximator to capture the class of linear $Q$ functions.
Given a data set $\{x_i,y_i\}_{i=1}^N$, we let $\Phi_X$ be the $N\times d$ matrix whose $i$-th row is $\phi(x)^T$ and let $\mathbf{y}\in \RR^N$ be the vector whose $i$-th entry is $y_i$. We use the following function approximation oracle
\begin{align*}
&\funcapprox_{\{(x_i,y_i)\}_{i}}(x)= \qquad\left\{
\begin{tabular}{c c}
$\phi(x)^\top(\Phi_X^\top\Phi_X)^{-1} \Phi_X^\top \mathbf{y}$ &  $\phi(x)\in \hbox{Span}(\{\phi(x_i)\})$\\
&\vspace{-3mm}\\
H & $\phi(x)\notin \hbox{Span}(\{\phi(x_i)\})$
\end{tabular}
\right.
\end{align*}
The above approximator fits a linear function on the observed $\{(x_i,y_i)\}$. When it is queried at a point $x$ that does not belong to the subspace spanned by the observations $x_i$'s, it will output an upper bound $H$.

Then we can show that the regret depends linearly on the feature dimension:
$$\reg(K) \leq H d.$$
The proof follows similarly as that of Theorem 2: the instant regret when visiting $(s,a)$ is bounded by $b(s,a)$, which is zero if $(s,a)$ is the linear combination of seen states in the feature space and equals to $ H$ whenever $\phi(s,a) \notin \hbox{Span}(\{\phi(s_i,a_i)\}_{i\in B}) .$ This would happen at most $d$ times since the feature space has dimension $d$. 

This result matches the optimal regret bound established in \cite{wen2017efficient}.
It is a $O(1)$ regret that does not depend on the episode number $K$. It scales linearly (instead of exponentially) with respect to dimension of the feature space.

\section{Conclusion}

This paper provides a simple upper-confidence reinforcement learning algorithm for episodic deterministic system. Given a metric over the state-action space that captures continuity of the rewards and transition functions, the algorithm achieves sublinear regret that depends on the doubling dimension of the state-action space. We show that this regret is non-improvable in general. Our method can be adapted to achieve the state-of-art $O(1)$ regret in the setting where the value functions can be represented by a linear combination of features.

\newpage

%

\bibliographystyle{apalike}
\bibliography{refregret}

\onecolumn
\appendix

\section{Missing Proofs}
\begin{proof}[Proof of Theorem~\ref{thm:opt-disc-reg}]
	Note that $Q^*_h\le H\cdot\mathbf{1}$ for any $h\in [H]$.
	Denote $\wh{r}^{(k)}$ as the $\wh{r}$ at the beginning of the $k$th episode.
	Similarly we denote $\wh{f}^{(k)}$ and $b^{(k)}$.
	By definition of the algorithm, we have
	\[
	\forall k\in [K]:\quad Q_h^*= r\le \wh{r}^{(k)} := Q^{(k)}_H.
	\]
	We can therefore show inductively that
	\[
	\forall k\in [K], (s,a)\in \cS\times \cA:\quad 
	Q^*_h[s, a]\le \min\big(H, \wh{r}^{(k)}(s, a) + \max_{a'\in \cA}Q^{(k)}_{h+1}[\wh{f}^{(k)}(s,a), a']  + b^{(k)}(s,a)\big) := Q^{(k)}_h[s, a].
	\]
	Denote $s_h^{(k)}$ as the state at time $(k,h)$
	and $a_h^{(k)} = \pi^{(k)}(s_h^{(k)}, h) =\arg\max_{a\in \cA} Q^{(k)}_h(s,a)$.
	Denote the policy at episode $k$ as $\pi^{(k)}$.
	We can rewrite the the regret as 
	\[
	\reg(K):=\sum_{k=1}^K\Big[V^*_1[s_0] - \sum_{h=1}^{H}r(s^{(k)}_h, a_h^{(k)})\Big]
	= \sum_{k=1}^K\Big[V^*_1[s_0] - V^{\pi^{(k)}}_1(s_0)\Big].
	\]
	Consider $V^*_h(s) - V^{\pi^{(k)}}_h(s)$.
	Denote 
	\[
	\forall s\in \cS:\quad V^{(k)}_h(s) := \max_{a\in \cA} Q^{(k)}_h(s,a).
	\] 
	Hence  \[
	\forall s\in \cS:\quad V^*(s)=\max_{a'}Q^*(s,a)\le V^{(k)}_h(s).
	\]
	We can thus upper bound $V^*_h(s) - V^{\pi^{(k)}}_h(s)$ as follows.
	\[
	\forall s\in \cS, h\in [H]:\quad V^*_h(s) - V^{\pi^{(k)}}_h(s)
	\le V^{(k)}_h(s) - V^{\pi^{(k)}}_h(s).
	\]
	Note that for all $h\in [H-1], s\in \cS$, we have
	\begin{align*}
	V^{(k)}_h[s_h^{(k)}] &= \min\{H, \quad \wh{r}^{(k)}(s_h^{(k)},  a_h^{(k)}) +  V^{(k)}_{h+1}[\wh{f}^{(k)}(s_h^{(k)}, a_h^{(k)})] + b^{(k)}(s_h^{(k)}, a_h^{(k)})\}\\
	& \le \min\{H, \quad \wh{r}^{(k)}(s_h^{(k)},  a_h^{(k)}) +  V^{(k)}_{h+1}(s_{h+1}^{(k)}) + b^{(k)}(s_h^{(k)}, a_h^{(k)})\}.
	\end{align*}
	Since
	\[
	V^{\pi^{(k)}}_h[s_h^{(k)}] =
	{r}^{(k)}(s_h^{(k)}, a_h^{(k)}) +   
	V^{\pi^{(k)}}_{h+1}(s_{h+1}^{(k)}).
	\]
	Thus
	\begin{align*}
	V^{(k)}(s_h^{(k)}, h) - V^{\pi^{(k)}}(s_h^{(k)}, h) 
	&\le  \wh{r}^{(k)}(s_h^{(k)}, a_h^{(k)}) - {r}^{(k)}(s_h^{(k)}, a_h^{(k)}) +b^{(k)}(s_h^{(k)}, a_h^{(k)})\\
	& \quad +   V^{(k)}_{h+1}(s_{h+1}^{(k)}) - V^{\pi^{(k)}}_{h+1}(s_{h+1}^{(k)}).
	\end{align*}
	Denote 
	\[
	\wh{b}^{(k)}(s_h^{(k)}, a_h^{(k)}) := \wh{r}^{(k)}(s_h^{(k)}, a_h^{(k)}) - {r}^{(k)}(s_h^{(k)}, a_h^{(k)}) +b^{(k)}(s_h^{(k)}, a_h^{(k)}).
	\]
	Now we can recursively bound 
	\begin{align*}
	V^{(k)}_1(s_0^{(k)}) - V^{\pi^{(k)}}_1(s_0^{(k)})
	&\le V^{(k)}_2(s_1^{(k)}) - V^{\pi^{(k)}}_2(s_1^{(k)})
	+ \wh{b}^{(k)}(s_1^{(k)}, a_1^{(k)})
	\\
	&\le  \ldots\\
	&\le V^{(k)}_h(s_h^{(k)}) - V^{\pi^{(k)}}_h(s_h^{(k)})
	+ \sum_{h'=1}^{h-1}\wh{b}^{(k)}(s_{h'}^{(k)}, a_{h'}^{(k)}) \\
	&\le \sum_{h'=1}^{H}\wh{b}^{(k)}(s_{h'}^{(k)}, a_{h'}^{(k)})
	\end{align*}
	where we denote $V^{(k)}_{H+1}(\cdot) = V^{\pi^{(k)}}_{H+1}(\cdot) = 0$.
	Moreover, we can immediately bound
	\[
		V^{(k)}_1(s_0^{(k)}) - V^{\pi^{(k)}}_1(s_0^{(k)})\le H.
	\]
	Therefore,
	\[
	\reg(K)\le \sum_{k=1}^K\min\Big[H, ~\sum_{h'=1}^{H}\wh{b}^{(k)}(s_{h'}^{(k)},a_{h'}^{(k)})\Big].
	\]
	It remains to bound
	\[
	\sum_{h'=1}^{H}\wh{b}^{(k)}(s_{h'}^{(k)}, a_{h'}^{(k)}).
	\]
	For each $(s_{h}^{(k)}, a_{h}^{(k)})$, if it has been visited in the past $k-1$ episodes, then
	\[
	\wh{b}^{(k)}(s_{h}^{(k)}, a_{h}^{(k)}) = \wh{r}^{(k)}(s_h^{(k)}, a_h^{(k)}) - {r}^{(k)}(s_h^{(k)}, a_h^{(k)}) +b^{(k)}(s_h^{(k)}, a_h^{(k)}) = 0.
	\]
	If it is visited for first time, 
	then 
	\[
	\wh{b}^{(k)}(s_{h}^{(k)}, a_{h}^{(k)}) = \wh{r}^{(k)}(s_h^{(k)}, a_h^{(k)}) - {r}^{(k)}(s_h^{(k)}, a_h^{(k)}) +b^{(k)}(s_h^{(k)}, a_h^{(k)}) \le H+1.
	\]
	If there exists such a $(s_{h}^{(k)}, a_{h}^{(k)})$, then the regret of the entire episode can be bounded by $H$.
	Since there are $SA$ number of such $(s,a)$ pairs, we have
	\[
	\sum_{k=1}^K\Big[V^*_1[s_0] - V^{\pi^{(k)}}_1(s_0)\Big]
	\le \sum_{k=1}^K H\cdot
	\mathbb{I}(\text{there exists an $h$, s.t. $(s_h^{(k)}, a_h^{(k)})$ is visited the first time})\le SAH.
	\]
\end{proof}

\begin{theorem}
	\label{thm:reg-lb-det}
	Denote $\cM(\cS,\cA, H)$ to be the set of all deterministic MDPs with states $\cS$, actions $\cA$ and horizon $H$.
	Let $\cK$ be an online algorithm for $\cM(\cS,\cA, H)$.
	Then
	\[
\max_{M\in \cM(\cS,\cA, H)} \reg_{\cK}^M(K) =\Omega(\min(|\cS||\cA|, K) H)
	\]
	as long as  $H\ge \log|\cS|$.
\end{theorem}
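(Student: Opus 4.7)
The plan is to build a family of hard deterministic MDPs, each parameterized by a ``planted'' rewarding state-action pair, and apply Yao's minimax principle. Let $d = \lceil \log_2|\cS|\rceil$ so that a complete binary tree of depth $d$ embeds into $\cS$; the hypothesis $H \geq \log|\cS|$ exactly guarantees that such a tree can be traversed within one episode. Designate actions $a_L,a_R\in\cA$ as ``go to the left/right child'' at internal tree nodes; all other actions at internal nodes, and every action at leaves, transition either to an absorbing rewarding state $s_R$ (reward $1$ per step thereafter) or to an absorbing sink $s_\bot$ (reward $0$). The planted pair $(s^*,a^*)$ is drawn uniformly at random from the $N = \Theta(|\cS|\cdot|\cA|)$ (leaf, action) combinations: taking $a^*$ at $s^*$ is the unique entry point to $s_R$, and every other non-navigation action anywhere leads to $s_\bot$.

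The optimal policy navigates from the root to $s^*$ in $d$ steps, takes $a^*$ to enter $s_R$, and earns reward $1$ on each of the remaining $H-d-1$ steps, so $V^*(s_0) = H - d = \Theta(H)$. The key structural property is that any trajectory is absorbed the moment it takes a non-navigation action at a leaf, meaning each episode tests at most one candidate for $(s^*,a^*)$. Conditioned on never having hit the planted pair, all rewards observed along the trajectory are identically zero, so the algorithm obtains no information about $(s^*,a^*)$ and the posterior over the untested candidates stays uniform. A coupon-collector-style argument then shows that the expected first episode $T$ at which the algorithm plays $(s^*,a^*)$ satisfies $\EE[T\wedge K] = \Omega(\min(N,K))$. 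Since each of these $\Omega(\min(N,K))$ pre-hitting episodes suffers instantaneous regret $V^*(s_0) = \Omega(H)$, the expected regret over the random planting is $\Omega(\min(|\cS||\cA|,K)\cdot H)$, and Yao's principle then supplies a deterministic $M\in\cM(\cS,\cA,H)$ achieving the stated worst-case regret.

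The main obstacle is guaranteeing a per-episode regret of $\Omega(H)$ even in the borderline regime $H \approx \log|\cS|$: since navigation consumes $d \approx \log|\cS|$ steps out of $H$, the residual horizon $H-d$ at $s_R$ shrinks and the instantaneous regret can degrade. This is handled either by absorbing the multiplicative gap $(H-d)/H$ into the hidden constant of the $\Omega$ (the bound really reads $\Omega((H-d)\cdot\min(N,K))$, matching $\Omega(H\cdot\min(N,K))$ once $H\geq c\log|\cS|$ for some $c>1$), or by using a shallower tree with fewer leaves when $H$ is only marginally larger than $\log|\cS|$ and adjusting $N$ by a constant factor. A smaller secondary issue is making the coupon-collector bound rigorous against adaptive algorithms, which is clean in this setup because all observations before the first hit are identically zero and hence uninformative about $(s^*,a^*)$, so the algorithm's $k$-th non-hitting trial is distributionally a uniform sample-without-replacement from the remaining candidates.
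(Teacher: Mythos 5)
Your proposal is correct and follows essentially the same route as the paper's proof: a binary tree of depth $O(\log|\cS|)$ leading to $\Theta(|\cS||\cA|)$ leaf--action candidates, a uniformly planted rewarding pair feeding an absorbing reward state while all other choices feed an absorbing sink, the observation that each episode tests at most one candidate and yields no information before the first hit, and an application of Yao's minimax principle. Your explicit treatment of the borderline regime $H\approx\log|\cS|$ and of adaptivity in the coupon-collector step is a slightly more careful rendering of what the paper handles implicitly via the fixed all-sink instance $\wt{M}$, but the construction and argument are the same.
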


\begin{proof}[Proof of Theorem~\ref{thm:reg-lb-det}]
	We construct a distribution $\mu$ on $\cM(\cS,\cA, H)$ as follows.
	All the MDPs in the support of $\mu$ start with a binary tree $T$ rooted at state $s_0$. For every state in the tree, action $a_0\in \cA$ takes the transition to the left child and all other actions take the transition to the right child.
	The binary tree has $O(\log(|\cS|))$ layers. 
	Each edge of the tree has $0$ reward.
	Denote the leaves of the tree as $\cS'\subset \cS$. 
	Without loss of generality, we let the size of the tree to be $|\cS|-2$ and hence $|\cS'|=\lceil(|\cS|-2)/2 \rceil$. 
	We denote two absorbing states $s_n, s_r\in \cS$ that are not in the tree. 
	Every action on $s_r, s_n$ self-loops. 
	$s_n$ generates no reward for all actions.
	$s_r$ generates reward $1$ for every action.
	We pick a random $(s^*,a^*)\in \cS'\times \cA$, and set $f(s^*,a^*)=s_r$.
	For the rest $(s,a)$, we set $f(s,a) = s_n$.
	Note that an optimal policy will reach $(s^*,a^*)$ and obtain reward $H-O(\log(|\cS|))$.
	
	Next, we show that for any deterministic algorithm $\cK_{det}$, the expected regret on the distribution $\mu$ is $\Omega(|\cS||\cA|H)$.
	Note that for every MDP $M\in \supp(\mu)$ has the same initial structures. 
	For a deterministic algorithm $\cK_{det}$, we consider a particular instance $\wt{M}\in \cM(\cS, \cA, H)$:
	the initial structure of $\wt{M}$ is $T$, but very state-action pair $(s,a)\in\cS'\times\cA$ transitions to $s_n$.
	Consider $\cK_{det}$ runs on $\wt{M}$.
	Suppose we have run $\cK_{det}$ for $K=p|\cS||\cA|$ episodes with $p<1$.
	Denote the state-action pair reached at the end of episode $k\in [K]$ as $\big(\wt{s}'(\cK_{det}, k), \wt{a}'(\cK_{det}, k)\big)$.
	
	Suppose we now run $\cK_{det}$ on an instance sampled from $\mu$, then our claim is that with probability at least $1-p$, $\cK_{det}$ has payed regret at least $K(H-O(\log(|\cS|)))$.
	Indeed, for an instance $M\sim \mu$,
	if for all $k$, $\big(\wt{s}'(\cK_{det}, k), \wt{a}'(\cK_{det}, k)\big)$ on $M$ does not equal to $(s^*, a^*)$, which happens with probability $1-K/(|\cS||\cA|) = 1-p$, then $\cK_{det}$ would have the exact same history on $M$ as it runs on $\wt{M}$.
	Therefore, it pays regret $K(H-O(\log(|\cS|)))$ on $M$.
	Hence,
	\begin{align*}
	\min_{\cK_{det}}\EE_{M\sim\mu}\big[\reg_{\cK_{det}}^M(K)\big]&=(1-p)\cdot K(H-O(\log(|\cS|)))\\
	&=\Omega(|\cS||\cA|H).
	\end{align*}
	as long as $K=\Omega(|\cS||\cA|)$.
	Denote $\nu$ as an distribution on $\cM(\cS,\cA, H)$, then we have,
	\[
	\sup_{\nu}\min_{\cK_{det}}\EE_{M\sim\nu}\big[\reg_{\cK_{det}}^M(K)\big] \ge \min_{\cK_{det}}\EE_{M\sim\mu}\big[\reg_{\cK_{det}}^M(K)\big] = \Omega(|\cS||\cA|H).
	\]
	By Yao's minimax \cite{yao1977probabilistic} theorem, we have,
	\[
	\min_{\cK}\max_{M\in \cM(\cS, \cA, H)}\big[\reg_{\cK}^M(K)\big] \ge  \sup_{\nu}\min_{\cK_{det}}\EE_{M\sim\nu}\big[\reg_{\cK_{det}}^M(K)\big]  = \Omega(|\cS||\cA|H).
	\]
	This completes the proof.
\end{proof}

\begin{proof}[Proof of Theorem~\ref{thm:regretlowerbound}]
	We will use the same distribution as in the proof of Theorem~\ref{thm:reg-lb-det} to prove the theorem.
	Note that the space $\cS\times \cA$ in the proof of Theorem~\ref{thm:reg-lb-det} is not a metric space yet.
	To convert it to  a metric space, we assign a na\"ive metric by setting
	\[
	\dist[(s,a), (s',a')] = H\cdot \mathbb{I}[(s,a)=(s',a')].
	\]
	Since the optimal action-value function $Q^*$ is upper bounded by $H$ uniformly, the Lipschitz continuity conditions in Assumption~\ref{assump-metric} can be verified for any $L_1\ge1$ and $L_2\ge 1$.
	Then Theorem~\ref{thm:regretlowerbound} is proved the same way as Theorem~\ref{thm:reg-lb-det}.
\end{proof}

\end{document}